\documentclass[letterpaper]{article}

\usepackage[]{aaai2027}  
\usepackage[hyphens]{url}  
\usepackage{graphicx} 
\usepackage{natbib}  
\usepackage{caption} 
\usepackage{amsmath,amssymb,amsfonts}
\usepackage{amsthm}
\usepackage{bm}
\usepackage{booktabs}
\usepackage{multirow}
\usepackage{xcolor}
\usepackage{tikz}
\usepackage{subcaption}
\usepackage[ruled,vlined]{algorithm2e}

\definecolor{beforegray}{HTML}{8A8A8A}
\definecolor{highblue}{HTML}{0072B2}
\definecolor{lowgreen}{HTML}{009E73}
\definecolor{markerred}{HTML}{D55E00}

\definecolor{beforegray}{RGB}{165,165,165}
\definecolor{highblue}{RGB}{20,30,185}
\definecolor{lowgreen}{RGB}{20,130,95}
\definecolor{markerred}{RGB}{190,20,20}

\newcommand{\sg}{\operatorname{sg}}
\newcommand{\I}{\mathbb{I}}

\newcommand{\Vocab}{\mathcal{V}}

\definecolor{gaincolor}{RGB}{190,45,45}
\definecolor{dropcolor}{RGB}{0,120,80}

\newcommand{\gain}[1]{\ensuremath{+#1}}
\newcommand{\drop}[1]{\ensuremath{-#1}}

\newtheorem{theorem}{Theorem}

\newtheorem{proposition}{Proposition}

\title{Influence-Directed Distillation: Solving the Diversity Bottleneck in Sampled-Token On-Policy Distillation}

\author {
    Run Yang\textsuperscript{\rm 1},
    Runpeng Dai\textsuperscript{\rm 2},
    Jie Sun\textsuperscript{\rm 3},
    Jielei Zhang\textsuperscript{\rm 1},
    Fan Zhou \textsuperscript{\rm 4},
    Hongtu Zhu\textsuperscript{\rm 2},
    Peiyi Li\textsuperscript{\rm 1},
    Longwen Gao\textsuperscript{\rm 1}
}
\affiliations {
    \textsuperscript{\rm 1}BiliBili.Inc\\
    \textsuperscript{\rm 2}University of North Carolina at Chapel Hill\\
    \textsuperscript{\rm 3}University of Science and Technology of China\\
    \textsuperscript{\rm 4}Shanghai University of Finance and Economics \\
}

\begin{document}

\maketitle
\begin{abstract}
Sampled-token on-policy distillation (OPD) efficiently transfers capabilities from teacher to student using student-generated tokens, requiring teacher probabilities only for sampled tokens. Yet it frequently suffers from diversity distillation failure: the student's pass@1 improves while its pass@$k$ plateaus, failing to inherit the teacher's diversity. To explain this, we introduce \emph{First-Order Local Entropy Influence}, a signed first-order proxy that decouples each update's entropy effect into the teacher--student log-probability gap and the student's local probability structure, and empirically links entropy contraction to negative-influence positions. Motivated by this, we propose \emph{Influence-Directed Adaptive On-Policy Distillation} (IDA-OPD): rather than relying on costly full-vocabulary Forward-KL objectives, it preserves entropy-expanding updates while replacing entropy-contracting ones with divergence-adaptive advantage shrinkage, using only the teacher's sampled-token log-probability. Experiments on reasoning-oriented distillation show IDA-OPD consistently improves pass@$k$, inheriting the teacher's diversity through distillation, matches the strongest teacher-informed methods at strictly lower cost, and broadly maintains vanilla OPD's pass@1, all without full-vocabulary teacher information.
\end{abstract}

\section{Introduction}


On-policy distillation (OPD) is increasingly central to large language model (LLM) post-training.
It has been widely integrated into the post-training pipelines of recent large-scale systems, including GLM-5 (Zeng et al. 2026), KiMi (Team et al. 2025), and Qwen3 (Yang et al. 2025), yielding significant improvements. Its standard form, however, distills against the teacher's probability distribution over the full vocabulary, which is costly to compute and store at scale; this motivates sampled-token OPD, which needs only the teacher's sampled-token probability and avoids full-vocabulary logits ~\citep{fu2026revisiting,jia2026aopd}.

However, sampled-token OPD frequently suffers \emph{diversity distillation failure}: the student's pass@1 improves while its pass@$k$ plateaus~\citep{fu2026revisiting,cui2025entropy}.



Existing methods for protecting diversity fall into two categories, which trade off precision and efficiency. Teacher-informed methods inject richer teacher signals to guide the student: divergence-based criteria target areas of high teacher-student disagreement~\citep{xu2026tip,wang2026teachability}, AOPD decouples exploration from imitation in these specific regions~\citep{jia2026aopd}, and Entropy-Aware OPD introduces a forward-KL penalty for high-uncertainty tokens~\citep{jin2026entropy}. While effective, they forfeit the computational efficiency of sampled-token OPD by reintroducing expensive top-$K$ teacher distributions via Forward KL. In contrast, {student-side} methods remain computationally efficient by directly inflating entropy. They achieve this by relaxing heavy-tailed credits~\citep{ko2026reopold}, reinforcing negative samples~\citep{zhu2026surprising}, adding an entropy bonus~\citep{schulman2017proximal}, or adapting advantage shapes and entropy coefficients~\citep{cheng2025reasoning,zhang2025revisiting,jiang2025rethinking,yu2025dapo}. However, their intervention is blunt; they raise entropy globally without pinpointing the specific updates that drain diversity, ultimately leaving much of the performance gap unresolved. This raises a question:

\begin{quote}
\emph{Can we protect generation diversity using
only the teacher's sampled-token probability?}
\end{quote}

Following prior work that studies generation diversity through the lens of policy entropy~\citep{wang2025beyond,cui2025entropy,dai2026cde}, we study how updates in sampled-token OPD affect the student's entropy change.
We find that this entropy change can be modeled by the \emph{First-Order Local Entropy Influence} $\mathcal{I}_H(y)$, which mathematically decouples the entropy shift into two components: the teacher--student log-probability difference, i.e., the advantage $A_y$, and the student's local probability structure.
Since the local probability structure is fixed at a given position, controlling $A_y$ therefore controls the update's entropy change.

However, uniformly shrinking $A_y$ at all entropy-contracting positions would blindly discard essential learning signals.
This motivates a closer look at where the contraction originates to design a selective intervention.
We measure the cumulative entropy loss across positions, and Figure~\ref{fig:entropy_flux} shows the contraction peaks sharply in the low-discrepancy region.
Because their teacher--student discrepancy is small, forcing these updates yields little corrective signal per token yet, in aggregate, continuously drains the model's entropy.

Combining these two findings yields our method, \emph{Influence-Directed Adaptive On-Policy Distillation} (IDA-OPD).
We keep entropy-expanding updates intact and reweight the advantage $A_y$ of each entropy-contracting update ($\mathcal{I}_H(y)<0$) by a divergence-adaptive weight $w_y$: the attenuation is strongest where the teacher--student discrepancy is small, exactly the low-value updates that dominate the entropy drain, while high-discrepancy corrections are left intact.

Extensive experiments on reasoning-oriented distillation demonstrate
that IDA-OPD substantially mitigates diversity distillation failure. It
curbs the entropy contraction and consistently improves pass@$k$ over methods that operate under the same
sampled-token budget, while remaining on par with the
strongest teacher-informed methods at a strictly lower information
cost---broadly maintaining pass@1 accuracy and using no full-vocabulary
teacher information. Our analysis
further shows that selecting updates by the sign of $\mathcal{I}_H(y)$
under real training controls the student's entropy trajectory, and that
our divergence-adaptive shrinkage targets the low-discrepancy region
where entropy contraction concentrates.

Our core contributions are summarized as follows:

%
%
%
\begin{itemize}
\item We trace the diversity distillation failure of sampled-token OPD, the gap between improved pass@1 and stagnant pass@$k$, to entropy-contracting updates, including a substantial share at low-divergence positions.

\item We formulate \emph{First-Order Local Entropy Influence}, a signed metric decoupling the teacher--student log-probability gap $A_y$ from the student's local probability structure $D_y$, whose sign controls the entropy trajectory under real training.

\item We propose IDA-OPD, which reweights entropy-contracting advantages with a divergence-adaptive shrinkage weight $w_y$, improving diversity over same-budget methods and matching the strongest teacher-informed ones without full-vocabulary logits or auxiliary FKL objectives.
\end{itemize}

\section{Preliminaries}

We formalize the training process of on-policy distillation as follows. Let $\Vocab$ denote the vocabulary and let $h$ be a decoding prefix visited by a student rollout, with student and teacher next-token distributions $p_h(\cdot)=p_\theta(\cdot\mid h)$ and $q_h(\cdot)=p_{\mathrm T}(\cdot\mid h)$. OPD matches these distributions on student-induced prefixes by optimizing the reverse KL $\mathbb E_{h\sim p_\theta}[D_{\mathrm{KL}}(p_h\|q_h)]$.

Computing this divergence exactly requires full-vocabulary teacher logits at every prefix, which is prohibitively expensive. Since reverse KL is an expectation under the student distribution, sampled-token OPD or K1 estimator is often applied by drawing a single token from distribution $y\sim p_\theta(\cdot\mid h)$, which queries only the teacher log-probability of $y$ \cite{fu2026revisiting,jia2026aopd}. The advantage of the sampled token is
\begin{equation*}
    A_y
    =
    \log p_{\mathrm T}(y\mid h)-\log p_\theta(y\mid h),
\end{equation*}
and the vanilla sampled-token OPD loss is
\begin{equation*}
    \ell_y^{\mathrm{OPD}}
    =
    -\sg(A_y)\log p_\theta(y\mid h),
    \label{eq:vanilla_opd}
\end{equation*}
where $\sg(\cdot)$ is the stop-gradient operator. Although locally unbiased, this efficient estimator replaces the full teacher distribution with a noisy one-token signal, which can reduce diversity and cause premature entropy collapse \cite{jin2026entropy,ko2026reopold}. Our work therefore aims to preserve student diversity while retaining the efficiency of the sampled-token setup.

\section{Understanding Diversity Distillation Failure}
\label{sec:understanding_failure}

In this section, we investigate the mechanism underlying diversity distillation failure in sampled-token OPD through the lens of entropy. We first show that the advantage $A_y$ alone does not determine the entropy effect of an update, which also depends critically on the student's current distribution. This observation motivates a first-order analysis, through which we derive First-Order Local Entropy Influence to characterize the entropy effect of each sampled-token update.

\subsection{Advantage Alone Does Not Determine Entropy}
\label{subsec:advantage_entropy}
In sampled-token OPD, $A_y$ governs the update to the sampled token: its sign determines whether $y$ is reinforced or suppressed, while its magnitude controls the update strength. Accordingly, prior work has used $A_y$ as a natural signal for analyzing and regulating OPD~\citep{ko2026reopold,jia2026aopd}.

However, our analysis reveals that $A_y$ alone is insufficient to determine the effect on entropy. Figure~\ref{fig:advantage_entropy_intuition}(\subref{fig:advantage_vs_entropy}) illustrates this empirically by plotting the measured one-step entropy change against $A_y$ at real training positions. Rather than exhibiting a single clear relationship, the points form a broad two-sided fan: updates with similar advantages can produce entropy changes of opposite signs and substantially different magnitudes.

Figure~\ref{fig:advantage_entropy_intuition}(\subref{fig:entropy_intuition}) provides the underlying intuition. Consider two updates with the same positive advantage $A_y>0$. Reinforcing a token that already has high probability further concentrates the distribution and decreases entropy, whereas reinforcing a low-probability token can spread probability mass more evenly and increase entropy. Thus, the entropy effect is jointly determined by $A_y$ and the student’s current distribution, motivating a more systematic analysis of their interaction.

\begin{figure}[t]
    \centering

    \begin{subfigure}[t]{\linewidth}
    \centering

    \includegraphics[width=\linewidth]{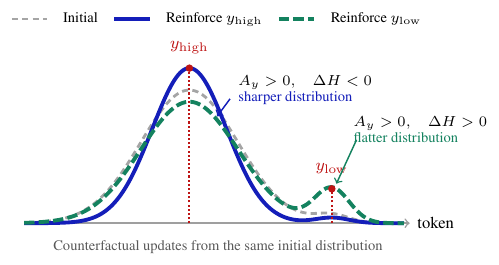}

    \caption{
        The same positive advantage can produce opposite entropy
        effects depending on the student's local probability structure.
    }
    \label{fig:entropy_intuition}
\end{subfigure}

    \vspace{1.5mm}

    \begin{subfigure}[t]{0.49\linewidth}
        \centering

        \includegraphics[
            width=\linewidth
        ]{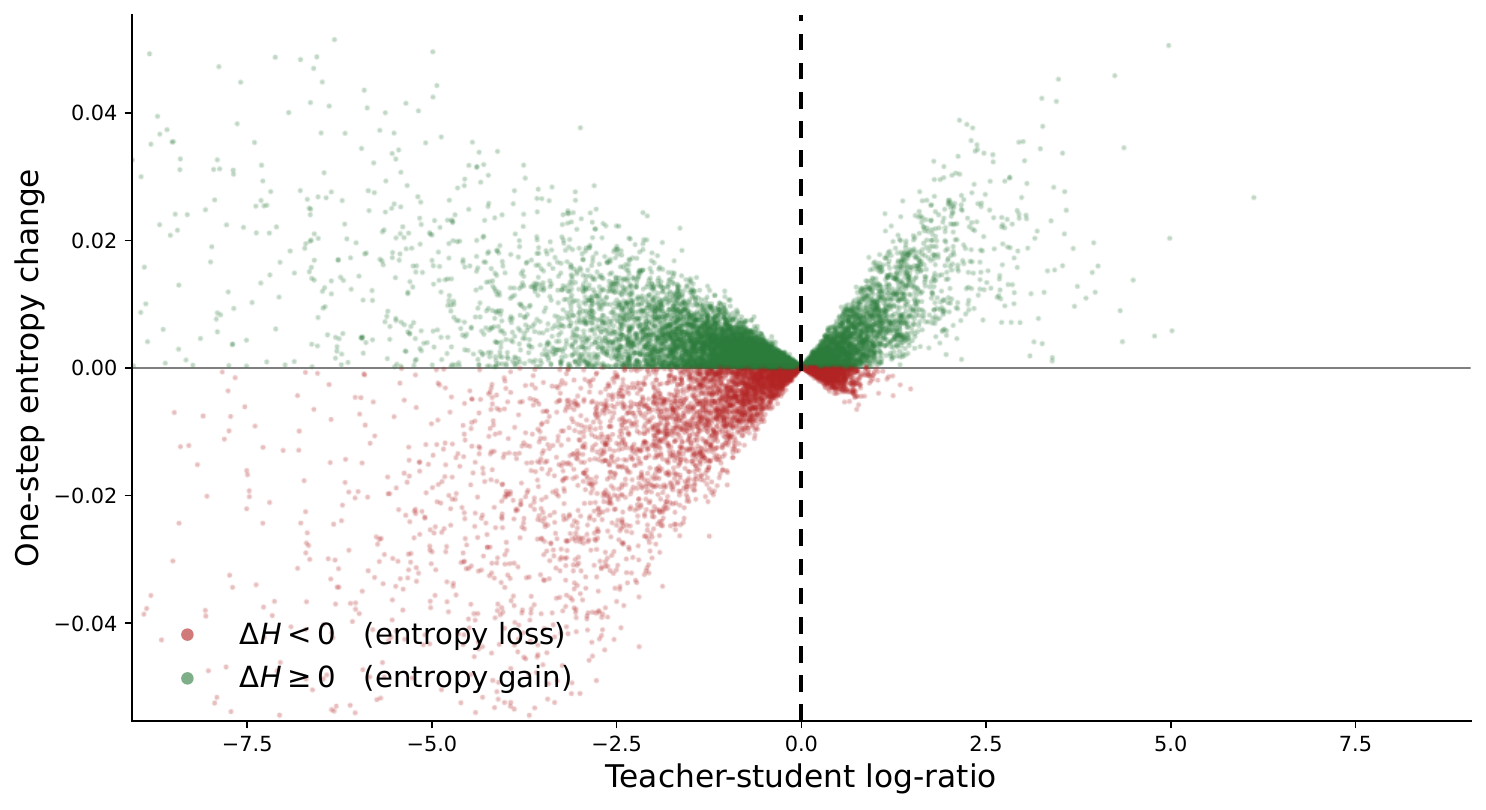}

        \caption{
            Advantage $A_y$ versus measured entropy change.
        }
        \label{fig:advantage_vs_entropy}
    \end{subfigure}
    \hfill
    \begin{subfigure}[t]{0.49\linewidth}
        \centering

        \includegraphics[
            width=\linewidth
        ]{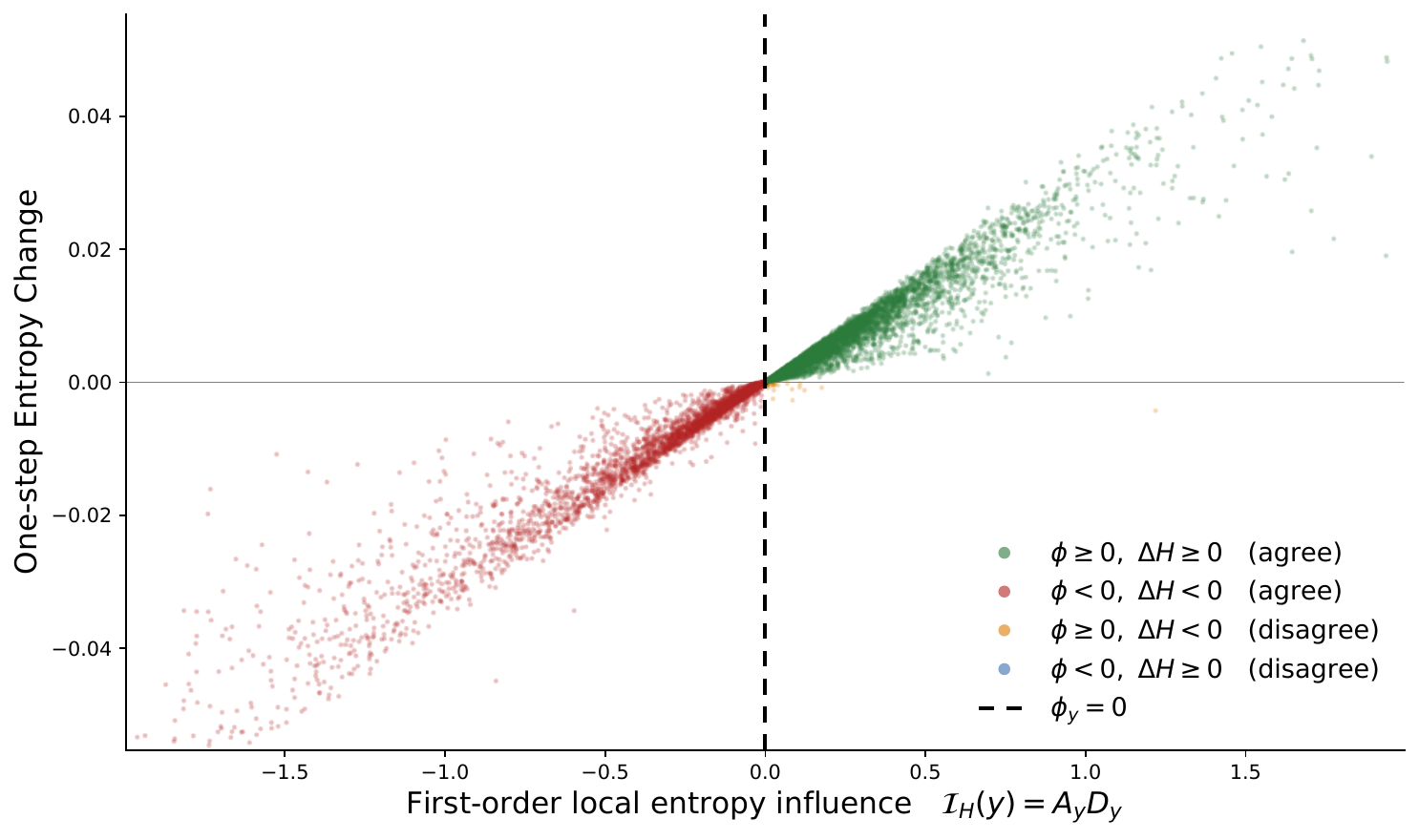}

        \caption{
            First-order influence $\mathcal{I}_H(y)$ versus
            measured entropy change.
        }
        \label{fig:influence_vs_entropy}
    \end{subfigure}

    \caption{
        \textbf{Why advantage alone does not determine the entropy
        effect of a sampled-token OPD update.}
        (a) An illustration: the same positive
        advantage sharpens the distribution when it reinforces a
        high-probability mode (lowering entropy) but flattens it when
        it promotes a low-probability alternative (raising entropy).
        (b) On real training positions,
        the advantage $A_y$ correlates weakly with the measured entropy
        change: the same $A_y$ spans both signs.
        (c) The first-order influence
        $\mathcal{I}_H(y)$ correlates tightly with the measured entropy
        change and predicts its sign.
    }
    \label{fig:advantage_entropy_intuition}
\end{figure}

\subsection{Diagnostic: First-Order Local Entropy Influence}
\label{sec:entropy_influence}

Let $p=(p_i)_{i\in\Vocab}$ be the student's current next-token distribution, where $p_i=p_\theta(i\mid h)$, with entropy $H(p)=-\sum_{i\in\Vocab}p_i\log p_i$. Let $p'$ denote the distribution after a small local sampled-token OPD update with step size $\eta$, and define the resulting local entropy change as
\begin{equation*}
    \Delta H = H(p')-H(p).
\end{equation*}
Our goal is to understand how the sampled-token advantage $A_y$ and the student's current distribution jointly determine the sign and magnitude of $\Delta H$. We characterize this interaction through a first-order expansion.

\begin{theorem}[First-Order Local Entropy Influence]
\label{thm:entropy_influence}
Under a local logit-space gradient step on the sampled-token OPD loss with learning rate $\eta$, the local entropy change satisfies
\begin{equation*}
    \Delta H
    =
    \eta \cdot \mathcal I_H(y)+\mathcal O(\eta^2),
\end{equation*}
where we define the \emph{First-Order Local Entropy Influence} as
$\mathcal I_H(y)=A_yD_y$. The entropy-direction factor is
\begin{equation}
    D_y=\sum_i p_i^2\bigl(\log p_i+H(p)\bigr)
    -p_y\bigl(\log p_y+H(p)\bigr).
    \label{eq:dy}
\end{equation}
\end{theorem}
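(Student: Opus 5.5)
We model the update directly in logit space. Write $p=\mathrm{softmax}(z)$ for logits $z\in\mathbb R^{|\Vocab|}$ at prefix $h$, and treat $z$ itself as the parameter. The sampled-token loss is $\ell_y^{\mathrm{OPD}}=-\sg(A_y)\log p_y$. Since $\partial \log p_y/\partial z_i=\mathbb I[i=y]-p_i$, a gradient step $z'=z-\eta\nabla_z\ell_y^{\mathrm{OPD}}$ gives
\begin{equation*}
\delta z_i := z'_i-z_i=\eta A_y\bigl(\mathbb I[i=y]-p_i\bigr).
\end{equation*}
This is exact, with no approximation yet. All of the $\mathcal O(\eta^2)$ error will then come from linearizing $H\circ\mathrm{softmax}$.

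Next we compute the gradient of entropy with respect to logits. From $\partial p_j/\partial z_i=p_j(\mathbb I[i=j]-p_i)$ and $\partial H/\partial p_j=-(\log p_j+1)$ we get
\begin{equation*}
\frac{\partial H}{\partial z_i}=-\sum_j(\log p_j+1)\,p_j(\mathbb I[i=j]-p_i)=-p_i\bigl(\log p_i+H(p)\bigr).
\end{equation*}
The constant $1$ cancels because $\sum_j p_j(\mathbb I[i=j]-p_i)=0$. Since $H\circ\mathrm{softmax}$ is smooth on the interior of the simplex (all $p_i>0$), Taylor's theorem gives $\Delta H=\langle\nabla_z H,\delta z\rangle+\mathcal O(\|\delta z\|^2)$. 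Because $\|\delta z\|=\mathcal O(\eta)$ for fixed $A_y$ and $p$, the remainder is $\mathcal O(\eta^2)$, with a constant depending on the local Hessian and on $A_y$.

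The last step is to evaluate the inner product:
\begin{equation*}
\langle\nabla_z H,\delta z\rangle=\eta A_y\Bigl[-p_y(\log p_y+H)+\sum_i p_i^2(\log p_i+H)\Bigr]=\eta A_yD_y,
\end{equation*}
which matches \eqref{eq:dy}. We can double-check this with the covariance form: since $\sum_i p_i(\log p_i+H)=0$, we have $\langle\nabla_zH,\delta z\rangle=-\eta A_y\,\mathrm{Cov}_{i\sim p}(\log p_i,\mathbb I[i=y]-p_i)$, which agrees.

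The main subtlety is interpretive rather than computational. We must make precise that "local logit-space gradient step" means differentiating with respect to the logits at $h$, as if the logits were free parameters, and not with respect to $\theta$ through a network Jacobian. If updates were taken in $\theta$, the step would be $\delta z\approx\eta J J^\top(\cdot)$ and $D_y$ would pick up a kernel. I would state this assumption explicitly. I would also note that the $\mathcal O(\eta^2)$ constant is uniform only for $p$ bounded away from the simplex boundary, where $\log p_i$ stays bounded.
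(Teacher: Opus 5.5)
Your proof is correct and follows essentially the same route as the paper. Both use the exact logit step $\delta z_k=\eta A_y(\mathbb{I}[k=y]-p_k)$ followed by a first-order expansion of the entropy through the softmax.

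The only difference is the order in which the chain rule is applied. You first form $\partial H/\partial z_i=-p_i(\log p_i+H(p))$ and pair it with $\delta z$. The paper first computes $\Delta p_i=\eta A_y p_i(\mathbb{I}[i=y]-p_i-p_y+\sum_k p_k^2)+\mathcal O(\eta^2)$ and then uses $\sum_i\Delta p_i=0$ to drop the constant term, which is the same cancellation as your vanishing $+1$.
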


\begin{proof}[Proof sketch]
A single sampled-token OPD update perturbs the logits along
$\Delta z_k = \eta A_y(\I[k=y]-p_k)$, so the advantage $A_y$ scales the
{entire} update while its direction $(\I[k=y]-p_k)$ is fixed by the
student. Propagating this through the softmax Jacobian
$\partial p_i/\partial z_k = p_i(\I[i=k]-p_k)$ gives a first-order
probability change in which $A_y$ again factors out cleanly,
\begin{equation*}
    \Delta p_i
    =
    p_i'-p_i
    =
    \eta A_y\, p_i\bigl(\I[i=y]-p_i-p_y+S_2\bigr)
    +\mathcal O(\eta^2),
\end{equation*}
where $S_2=\sum_k p_k^2$.
Expanding the entropy and using $\sum_i\Delta p_i=0$ gives
$\Delta H=-\sum_i\log p_i\,\Delta p_i+\mathcal O(\eta^2)$.
Substituting the probability change and collecting terms, the common
factor $\eta A_y$ pulls out, leaving a purely student-side remainder
$D_y$:
\begin{equation*}
    \Delta H
    = \eta\,A_y D_y+\mathcal O(\eta^2).
\end{equation*}
The full derivation is given in Appendix.
\end{proof}

This decomposition is the crux of our diagnostic: to first order, the local entropy change $\Delta H$ is governed by the product of two components, the advantage $A_y=\log q_y-\log p_y$ and the factor $D_y$, which is determined entirely by the student's current probability distribution. This formalizes the observation in Section~\ref{subsec:advantage_entropy} that the entropy effect is jointly determined by $A_y$ and $p$, and Figure~\ref{fig:advantage_entropy_intuition}(\subref{fig:influence_vs_entropy}) indicates it on real training positions: unlike the two-sided pattern produced by $A_y$, $\mathcal I_H(y)$ closely tracks the measured one-step entropy change. Governing the leading-order term, $\mathcal I_H(y)$ thus provides a natural token-wise signal for designing transformations that selectively modify entropy-contracting updates while preserving entropy-expanding ones to protect the student's entropy during distillation.

\section{Influence-Directed Adaptive On-Policy Distillation}
\label{sec:method}

As established in Section~\ref{sec:understanding_failure}, the First-Order Local Entropy Influence $\mathcal{I}_H(y)$ flags entropy-contracting updates, but uniformly penalizing all of them risks discarding critical teacher corrections. To design a precise intervention, we first trace the empirical origin of the entropy loss. Selective OPD commonly assumes the most consequential entropy shifts occur at high-divergence tokens where teacher and student sharply disagree~\citep{xu2026tip,wang2026teachability,jin2026entropy}. To test this, we bin sampled-token updates by the normalized discrepancy $\delta_y = \frac{q_h(y)-p_h(y)}{q_h(y)+p_h(y)} \in [-1, 1]$ and, within each bin, measure the cumulative entropy loss, the actual shift in policy entropy across genuine optimizer steps, not the first-order proxy, alongside the token count.

\begin{figure}[t]
    \centering
    \includegraphics[width=0.85\columnwidth]{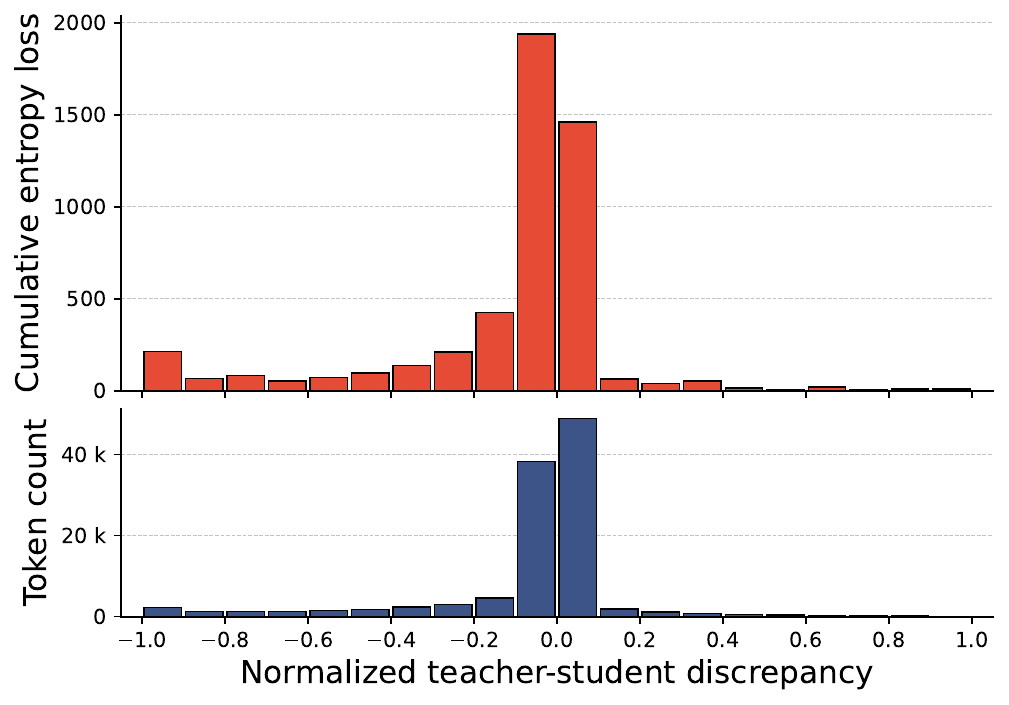}
    \caption{
    Distribution of cumulative entropy loss (top) and token counts (bottom) across normalized teacher--student discrepancy $\delta_y$. While high-divergence updates do contribute to entropy shifts ($\delta_y \approx -1.0$), the contraction overwhelmingly peaks at $\delta_y \approx 0$. This dominant entropy drain is driven by a massive, previously overlooked volume of highly aligned tokens.
    }
    \label{fig:entropy_flux}
\end{figure}

While this assumption captures part of the picture, Figure~\ref{fig:entropy_flux} indeed reveals a secondary cluster of entropy loss at the high-divergence negative tail ($\delta_y \approx -1.0$), it overlooks the primary driver of entropy collapse. As shown in the figure, the cumulative contraction overwhelmingly peaks in the low-discrepancy region ($\delta_y \approx 0$). Although each of these highly aligned updates has a minimal individual impact on entropy, the token-count histogram (bottom) exposes a massive population of such tokens, which in aggregate accumulate into the dominant source of the entropy drain. Consequently, a uniform penalty on all entropy-contracting updates ($\mathcal{I}_H(y) < 0$) that ignores this discrepancy is suboptimal: it suppresses critical teacher signals at the high-divergence tail while mistreating the massive density-driven loss at the center. The intervention must instead scale adaptively with $\delta_y$, motivating the divergence-adaptive shrinkage below.

\subsection{Solution: Divergence-Adaptive Shrinkage}
\label{subsec:adaptive_shrinkage}

We therefore propose \emph{Influence-Directed Adaptive On-Policy Distillation} (IDA-OPD), which attenuates the advantage $A_y$ at entropy-contracting positions ($\mathcal{I}_H(y)<0$) in proportion to how much correction the update still warrants. We measure this by the symmetric relative teacher--student disagreement at the sampled token,
\begin{equation}
    w_y = \frac{|q_y-p_y|}{q_y+p_y} \in [0,1).
\end{equation}
Acting as a scale-free retention factor, $w_y$ is small when teacher and student already agree (little correction needed) and approaches $1$ as their disagreement grows; it introduces no signal beyond $A_y$.

We use $w_y$ to modulate the advantage at entropy-contracting positions:
\begin{align}
    \widetilde A_y
    &=
    \begin{cases}
        A_y, & \mathcal I_H(y)\ge 0, \\
        w_y\,A_y, & \mathcal I_H(y)<0,
    \end{cases}
    \\
    \ell_y^{\mathrm{IDA\text{-}OPD}}
    &= -\sg(\widetilde A_y)\log p_y.
\end{align}
By construction, this rule is {sign-preserving}, it never flips the direction of the teacher correction, and {divergence-adaptive}: it multiplies the advantage by a weight that vanishes exactly where further optimization would merely drain entropy while leaving high-disagreement corrections intact.

\begin{proposition}[Quadratic attenuation near agreement]
\label{prop:o_a2}
For every prefix $h$ and sampled token $y$, $w_y\,A_y = \mathcal O\!\left(A_y^{2}\right)$ as $A_y\to 0$, and $w_y\,A_y\to A_y$ as $|A_y|\to\infty$.
\end{proposition}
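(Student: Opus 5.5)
The idea is to rewrite $w_y$ purely as a function of the advantage. Since $A_y=\log q_y-\log p_y$, we have $q_y/p_y=e^{A_y}$. Dividing numerator and denominator of $w_y$ by $p_y>0$ gives
\begin{equation*}
    w_y=\frac{|e^{A_y}-1|}{e^{A_y}+1}=\Bigl|\tanh\bigl(A_y/2\bigr)\Bigr|.
\end{equation*}
This holds for every prefix $h$ and token $y$, with no dependence on the rest of the distribution. Hence $w_yA_y=A_y\,|\tanh(A_y/2)|$, a single fixed function $g(a)=a|\tanh(a/2)|$ evaluated at $a=A_y$. Both claims of Proposition~\ref{prop:o_a2} then become elementary one-variable statements about $g$, and the uniformity over $h$ is automatic.

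\textbf{Behaviour near $a=0$.} We use the bound $|\tanh(x)|\le|x|$ for all real $x$. This follows because $\tanh$ is odd, $\tanh(0)=0$, and $\tanh'(x)=1-\tanh^2(x)\le 1$, so the mean value theorem applies. It gives
\begin{equation*}
    |w_yA_y|\le \tfrac12 A_y^2
\end{equation*}
for all $A_y$, which is stronger than the required $\mathcal O(A_y^2)$ as $A_y\to0$. The bound is also sharp in order: Taylor expansion gives $w_yA_y=\tfrac12 A_y|A_y|+\mathcal O(|A_y|^4)$. So the attenuation is genuinely quadratic, and $w_y$ is itself $\mathcal O(|A_y|)$.

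\textbf{Behaviour as $|A_y|\to\infty$.} Here $|\tanh(A_y/2)|\to1$, so $w_y\to1$. To make ``$w_yA_y\to A_y$'' meaningful, I would interpret it as $w_yA_y/A_y=w_y\to1$, or more strongly bound the additive gap. We have $1-|\tanh(x)|=2/(e^{2|x|}+1)$. Therefore
\begin{equation*}
    |A_y-w_yA_y|=|A_y|\cdot\frac{2}{e^{|A_y|}+1}\to0,
\end{equation*}
because exponential decay dominates the linear factor. This gives convergence even in the absolute sense, which covers either reading of the claim.

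\textbf{Main obstacle.} The only real step is the identity $w_y=|\tanh(A_y/2)|$, which relies on $p_y>0$ and $q_y>0$. Both hold because $y$ is sampled from the student's softmax and the teacher is also a softmax, so $A_y$ is finite. After this reduction the rest is routine; the one point needing care is stating the limit at infinity precisely, as done above.
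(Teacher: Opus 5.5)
Your proof is correct and follows essentially the route the paper indicates: it cites a closed-form identity relating $w_y$ to $A_y$, which is the identity $w_y=|\tanh(A_y/2)|$ you derive from $q_y=p_y e^{A_y}$, reducing both claims to elementary facts about $a\mapsto a|\tanh(a/2)|$. Your explicit bounds $|w_yA_y|\le\frac12 A_y^2$ and $|A_y-w_yA_y|=2|A_y|/(e^{|A_y|}+1)$ make both limits quantitative and uniform in $h$, which is slightly stronger than the statement requires.
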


The proof, together with the closed-form identity relating $w_y$ to $A_y$, is given in Appendix.
Proposition~\ref{prop:o_a2} formalizes the two
behaviors required from a divergence-adaptive shrinkage: near-quadratic
attenuation of low-discrepancy updates that contribute most of the
aggregate entropy drain and near-lossless preservation of high-divergence
corrections that carry substantive teacher signals. Consequently, IDA-OPD mitigates diversity
distillation failure without invoking full-vocabulary Forward KL
evaluations.

\section{Experiments}
\paragraph{Models.}
We evaluate IDA-OPD across two domains. For mathematics we use two settings spanning different scales: (1) Qwen3-8B-Non-Thinking-RL-Math $\rightarrow$ Qwen3-8B-Non-Thinking and (2) Qwen3-4B-Non-Thinking-RL-Math $\rightarrow$ Qwen3-4B-Non-Thinking; As for code, we use Qwen3-4B-Non-Thinking-RLCode $\rightarrow$ Qwen3-4B-Non-Thinking. All teachers are GRPO-trained following \cite{yang2026learning}.

\paragraph{Training Details.}
For mathematics we follow \cite{li2026filter}, training on DeepMath103K filtered to difficulty level 6 (the hardest problems). For code we distill from Qwen3-4B-Non-Thinking-RLCode~\cite{yang2026learning}, training on the code subset filtered from its data~\cite{yang2026learning}.



\paragraph{Evaluation.}
For mathematics, we evaluate on AIME 2024, AIME 2025, HMMT 2025 Feb, and HMMT 2025 Nov. Following \citet{zhu2026surprising}, we adopt the unbiased pass@$k$ estimator introduced by \citet{chen2021evaluating}. Specifically, for each problem $x$, we generate $n=128$ responses using independent random seeds, with temperature $1.0$ and top-$p=1.0$. Let $c_x$ denote the number of correct responses among the $n$ samples. We estimate pass@$k$ as
\begin{equation}
    \mathrm{pass@}k
    =
    \mathbb{E}_{x\sim\mathcal D}
    \left[
        1-
        \frac{\binom{n-c_x}{k}}
             {\binom{n}{k}}
    \right],
    \qquad n\ge k.
\end{equation}
Compared with directly evaluating pass@$k$ using only $k$ samples per problem, generating $n>k$ samples and applying the unbiased estimator mitigates the variance caused by stochastic decoding. We report pass@1 and pass@16 in Table~\ref{tab:main_results}, with the complete pass@$k$ curves presented in Section~\ref{subsec:passk}.

\paragraph{Baselines.}
We compare against standard OPD~\cite{lu2025opd}, three recent improvements (EOPD~\cite{jin2026entropy}, AOPD~\cite{jia2026aopd}, and REOPOLD~\cite{ko2026reopold}), and two entropy-based exploration strategies on top of OPD: an entropy bonus~\cite{schulman2017proximal} and advantage shaping~\cite{cheng2025reasoning}. EOPD, AOPD and REOPOLD use the official implementation.

\begin{table*}[!ht]
\centering
\small
\renewcommand{\arraystretch}{1.05}
\setlength{\tabcolsep}{7pt}
\begin{tabular}{lcccccccc}
\toprule
\multirow{2}{*}{\textbf{Method}}
& \multicolumn{2}{c}{\textbf{AIME24}}
& \multicolumn{2}{c}{\textbf{AIME25}}
& \multicolumn{2}{c}{\textbf{HMMT Feb}}
& \multicolumn{2}{c}{\textbf{HMMT Nov}} \\
\cmidrule(lr){2-3}
\cmidrule(lr){4-5}
\cmidrule(lr){6-7}
\cmidrule(lr){8-9}
& pass@1 & pass@16
& pass@1 & pass@16
& pass@1 & pass@16
& pass@1 & pass@16 \\
\midrule

\multicolumn{9}{c}{\textit{Qwen3-8B-Non-Thinking-RL-Math $\rightarrow$ Qwen3-8B-Non-Thinking}} \\
\midrule
Student             & 33.3 & 56.7 & 30.0 & 43.3 & 6.7 & 26.7 & 23.3 & 33.3 \\
Teacher             & 60.0 & 83.3 & 53.5 & 73.3 & 30.0 & 50.0 & 46.7 & 63.3 \\
\midrule
+ OPD               & 61.7 & 79.1 & 47.9 & 70.7 & 37.5 & 49.5 & 38.6 & 60.8 \\
+ REOPOLD           & 64.0 & 77.4 & 48.5 & 72.6 & 36.5 & 49.6 & 38.9 & 61.3 \\
+ Entropy Bonus     & 62.0 & 78.3 & 47.2 & 72.0 & 36.5 & 50.8 & 38.4 & 61.7 \\
+ Advantage Shaping & 63.2 & 79.6 & 47.8 & 72.4 & 37.0 & 52.6 & 39.0 & 62.3 \\
+ AOPD              & \textbf{65.9} & 81.0 & 48.6 & 76.0 & 37.2 & 54.4 & 37.6 & 62.7 \\
+ EOPD              & 65.6 & 80.8 & 48.0 & 72.6 & 37.5 & 54.1 & 39.8 & 62.8 \\
\midrule
Ours                & 63.3 & \textbf{83.3} & \textbf{50.0} & \textbf{76.7}
                    & 37.6 & \textbf{56.7} & \textbf{40.1} & \textbf{63.3} \\
$\Delta$ vs OPD     & \gain{1.6} & \gain{4.2} & \gain{2.1} & \gain{6.0}
                    & \gain{0.1} & \gain{7.2} & \gain{1.5} & \gain{2.5} \\
\midrule

\multicolumn{9}{c}{\textit{Qwen3-4B-Non-Thinking-RL-Math $\rightarrow$ Qwen3-4B-Non-Thinking}} \\
\midrule
Student             & 21.7 & 56.7 & 22.5 & 56.7 & 23.1 & 39.0 & 11.2 & 36.7 \\
Teacher             & 53.3 & 80.0 & 53.5 & 73.3 & 26.7 & 53.5 & 35.4 & 63.3 \\
\midrule
+ OPD               & 54.6 & 78.7 & 51.8 & 65.7 & \textbf{26.5} & 51.7 & 31.2 & 58.3 \\
+ REOPOLD           & 57.2 & 77.6 & 52.1 & 66.1 & 25.3 & 52.1 & 31.6 & 58.7 \\
+ Entropy Bonus     & 55.4 & 78.2 & 51.4 & 66.3 & 25.6 & 53.5 & 31.0 & 58.9 \\
+ Advantage Shaping & 56.8 & 79.4 & 52.0 & 66.4 & 26.0 & 55.6 & 31.5 & 59.5 \\
+ AOPD              & \textbf{61.0} & 81.1 & \textbf{52.4} & 69.7 & 26.1 & 57.4 & 30.3 & 59.4 \\
+ EOPD              & 59.2 & 80.0 & 51.2 & 66.6 & 26.1 & 57.3 & 31.9 & 59.8 \\
\midrule
Ours                & 56.7 & \textbf{83.3} & 53.3 & \textbf{70.0}
                    & 26.3 & \textbf{60.0} & \textbf{32.3} & \textbf{60.1} \\
$\Delta$ vs OPD     & \gain{2.1} & \gain{4.6} & \gain{1.5} & \gain{4.3}
                    & \drop{0.2} & \gain{8.3} & \gain{1.1} & \gain{1.8} \\
\bottomrule
\end{tabular}
\caption{Distillation results across multiple datasets and models. Best results are highlighted in \textbf{bold}. We report pass@1 and pass@16 on four competition-mathematics benchmarks under two teacher$\rightarrow$student settings . The $\Delta$ vs OPD rows report our absolute gains over vanilla OPD, showing that IDA-OPD consistently lifts pass@16 while broadly maintaining pass@1.}
\label{tab:main_results}
\end{table*}

\subsection{Main Results}

Table~\ref{tab:main_results} reports pass@1 and pass@16 across two model scales. Student-side methods (e.g., OPD, REOPOLD) exhibit clear diversity distillation failure: while they improve pass@1, their pass@16 severely trails the teacher. For instance, in the 4B setting, OPD's pass@16 stagnates at $78.7\%$ vs.\ the teacher's $80.0\%$ on AIME 2024, and $65.7\%$ vs.\ $73.3\%$ on AIME 2025. A similar entropy collapse occurs in the 8B setting.

IDA-OPD successfully mitigates this failure, attaining the highest pass@16 across all four benchmarks at both scales. On the 4B setting, it improves over standard OPD by \gain{8.3} (HMMT Feb), \gain{4.6} (AIME 2024), and \gain{4.3} (AIME 2025). The gains on the 8B setting are equally substantial, yielding improvements of \gain{7.2}, \gain{4.2}, and \gain{6.0}, respectively. Crucially, IDA-OPD matches or explicitly surpasses the teacher's pass@16 on several benchmarks (e.g., achieving $83.3\%$ vs.\ $80.0\%$ on 4B AIME 2024, and $76.7\%$ vs.\ $73.3\%$ on 8B AIME 2025). Furthermore, it achieves this diversity recovery while broadly \emph{improving} pass@1 over OPD.

When compared to other interventions, naive student-side adjustments like the entropy bonus~\citep{schulman2017proximal} and advantage shaping~\citep{cheng2025reasoning} fail to match IDA-OPD's pass@16, as inflating entropy indiscriminately cannot disentangle near-deterministic sharpening from informative corrections. 

Meanwhile, the teacher-informed AOPD and EOPD recover a large portion of the diversity, but IDA-OPD matches or consistently exceeds their performance (achieving the highest pass@16 overall in Table~\ref{tab:main_results}). The key distinction is that IDA-OPD accomplishes this strong diversity preservation in the evaluated settings using only a teacher-free entropy diagnostic, strictly avoiding the expensive top-$K$ teacher distributions required by those baselines.

Beyond mathematics, Table~\ref{tab:performance_on_code} shows the same pattern on the code domain: OPD leaves pass@16 flat ($72.9$ vs.\ the student's $73.5$ on MBPP+), while IDA-OPD improves both metrics, \gain{2.1}/\gain{0.9} on MBPP+ and \gain{1.3}/\gain{1.0} on LiveCodeBench (pass@1/pass@16), suggesting that our diagnosis and intervention transfer across different domains.

\subsection{Pass@$k$ Performance}
\label{subsec:passk}

Following prior work that measures diversity preservation via test-time scaling~\cite{cheng2025reasoning,jin2026entropy,ko2026reopold}, we use $\mathrm{pass@}k$ as a diversity-sensitive measure of the model's ability to find at least one correct solution across repeated samples. Figure~\ref{fig:passk_curves} reports pass@$k$ on AIME 2024 and AIME 2025 for $k=1$ to $64$. IDA-OPD is consistently higher across the range: the two methods are close at $k=1$ ($\sim\!+2$ points, where a single high-probability mode suffices), but the gap widens to $+4$ to $+5$ points by $k=8$--$16$, precisely where OPD narrows its output distribution to a few dominant modes while IDA-OPD retains enough diversity to keep discovering correct solutions. This advantage is sustained through $k=64$.

\begin{figure}[t]
\centering
\includegraphics[width=\columnwidth]{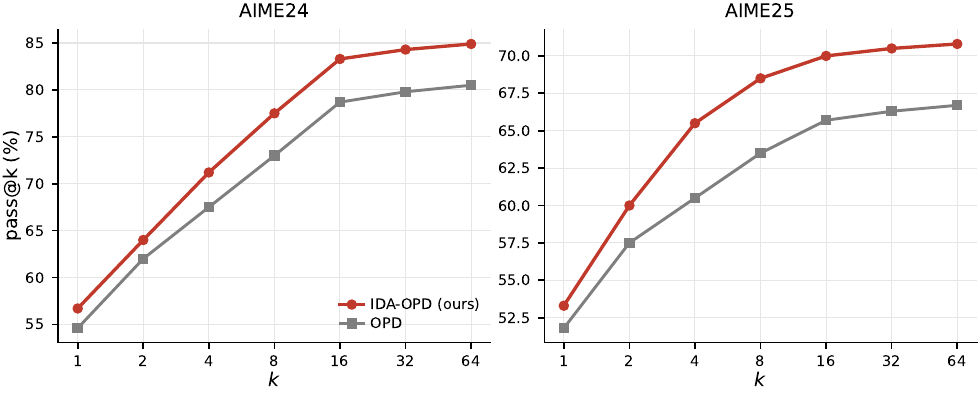}
\caption{pass@$k$ on AIME 2024 and AIME 2025 ($k=1$ to $64$) for IDA-OPD
versus standard OPD. The two methods are close for small $k$, but the
gap widens steadily as $k$ increases, reflecting the greater trajectory
diversity retained by IDA-OPD.}
\label{fig:passk_curves}
\end{figure}

\begin{table}[tb]
\centering
\small 
\setlength{\tabcolsep}{7pt} 
\begin{tabular}{l c c c c}
\toprule
\multirow{2}{*}{\textbf{Method}} & \multicolumn{2}{c}{\textbf{MBPP+}} & \multicolumn{2}{c}{\textbf{LCB}} \\
\cmidrule(lr){2-3} \cmidrule(lr){4-5}
& pass@1 & pass@16 & pass@1 & pass@16 \\
\midrule
Student          & 64.6 & 73.5 & 17.5 & 50.4 \\
OPD              & 69.5 & 72.9 & 26.8 & 54.2 \\
+ EOPD           & 70.3 & 73.7 & \textbf{29.2} & 54.9 \\
\midrule
\textbf{IDA-OPD}  & \textbf{71.6} & 73.8 & 28.1 & 55.2 \\
$\Delta$ vs OPD  & +2.1 & +0.9 & +1.3 & +1.0 \\
\bottomrule
\end{tabular}
\caption{Distillation results on the code domain (pass@1/pass@16 on MBPP+ and LiveCodeBench) for Qwen3-4B-Non-Thinking-RLCode $\rightarrow$ Qwen3-4B-Non-Thinking. Best results in \textbf{bold}.}
\label{tab:performance_on_code}
\end{table}

\subsection{Ablation Study}
\label{subsec:ablation}

IDA-OPD is composed of two coupled design choices: (i) using the sign of $\mathcal I_H(y)$ to select which positions to intervene on, and (ii) replacing the advantage at those positions with the divergence-adaptive shrinkage $w_y A_y$. To isolate the contribution of each component, we design three ablations, all trained under the same setting as IDA-OPD on the Qwen3-4B-Non-Thinking-RL-Math $\rightarrow$ Qwen3-4B block:
\begin{itemize}
    \item \textbf{Uniform shrinkage (w/o $\mathcal I_H$ gate).} Apply the shrinkage at every position, $\widetilde A_y=w_y A_y$, ignoring the sign of $\mathcal I_H(y)$.
    \item \textbf{Hard mask (w/o shrinkage).} Keep the gate but zero out entropy-contracting updates: $\widetilde A_y = A_y$ if $\mathcal I_H(y)\ge 0$, else $\widetilde A_y = 0$.
    \item \textbf{$\operatorname{sign}(A_y)$ gate.} Replace the gate with the advantage sign ($w_y A_y$ when $A_y<0$), testing whether $\mathcal I_H(y)$ adds information beyond the sign of $A_y$.
\end{itemize}


\begin{table}[tb]
\centering
\small 
\setlength{\tabcolsep}{1pt} 
\begin{tabular}{l c c c c}
\toprule
\multirow{2}{*}{\textbf{Variant}} & \multicolumn{2}{c}{\textbf{AIME24}} & \multicolumn{2}{c}{\textbf{AIME25}} \\
\cmidrule(lr){2-3} \cmidrule(lr){4-5} 
& pass@1 & pass@16 & pass@1 & pass@16 \\ 
\midrule
IDA-OPD (full)            & \textbf{56.7} & \textbf{83.3} & \textbf{53.3} & \textbf{70.0} \\
\midrule
w/o $\mathcal{I}_H$ gate  & 53.8 & 81.5 & 51.2 & 68.4 \\
w/o shrinkage (hard mask) & 52.1 & 82.0 & 50.4 & 68.9 \\
sign($A_y$) gate          & 54.2 & 76.8 & 51.9 & 65.2 \\
\bottomrule
\end{tabular}
\caption{Ablation study on the Qwen3-4B-Non-Thinking-RL-Math $\rightarrow$ Qwen3-4B setting. Each row disables one design choice of IDA-OPD while keeping the rest of the training pipeline unchanged.}
\label{tab:ablation}
\end{table}

Table~\ref{tab:ablation} shows every ablation degrades relative to full IDA-OPD. Removing the $\mathcal I_H(y)$ gate and shrinking indiscriminately over-attenuates entropy-expanding updates, dropping pass@1 ($56.7\to53.8$ on AIME24) with only modest pass@16 loss, so entropy-directed selection is necessary. Hard masking preserves diversity reasonably (pass@16 $82.0$/$68.9$) but inflicts the largest pass@1 damage ($56.7\to52.1$), discarding teacher corrections whose entropy cost is small relative to their value. Using $\operatorname{sign}(A_y)$ as the gate misidentifies entropy-risk positions, the sign of $A_y$ does not determine the entropy-change sign (Section~\ref{subsec:advantage_entropy}), so it protects diversity worst (pass@16 $76.8$/$65.2$) while trailing on pass@1. The gate and the shrinkage are thus complementary and both required.

\subsection{Effect of the Divergence-Adaptive Shrinkage}
\label{subsec:shrinkage_shape}

We now examine the {shape of the shrinkage}, i.e., how entropy-contracting updates are attenuated as a function of disagreement. The shrinkage sets the multiplier $c_y$ from the symmetric relative disagreement $w_y=\tfrac{|q_y-p_y|}{q_y+p_y}\in[0,1)$. Holding the $\mathcal I_H(y)$ gate fixed, we compare four shapes: a non-adaptive \textbf{Constant}, concave \textbf{Sqrt} ($c_y=w_y^{1/2}$), our linear \textbf{Ours} ($c_y=w_y$), and convex \textbf{Square} ($c_y=w_y^{2}$).

\begin{table}[t]
\centering
\renewcommand{\arraystretch}{1.15}
\small
\setlength{\tabcolsep}{1pt}
\begin{tabular}{lcccc}
\toprule
\multirow{2}{*}{\textbf{Shrinkage $c_y=f(w_y)$}} & \multicolumn{2}{c}{\textbf{AIME24}} & \multicolumn{2}{c}{\textbf{AIME25}} \\
\cmidrule(lr){2-3} \cmidrule(lr){4-5}
 & pass@1 & pass@16 & pass@1 & pass@16 \\
\midrule
Constant ($c_y=\alpha$)      & 54.6 & 79.2 & 51.5 & 66.7 \\
Sqrt ($c_y=w_y^{1/2}$)       & 55.9 & 80.4 & 52.4 & 67.5 \\
Ours ($c_y=w_y$)             & \textbf{56.7} & \textbf{83.3} & \textbf{53.3} & \textbf{70.0} \\
Square ($c_y=w_y^{2}$)       & 54.1 & 81.6 & 51.0 & 68.8 \\
\bottomrule
\end{tabular}
\caption{Effect of the disagreement-to-shrinkage mapping $f(w_y)$ on the Qwen3-4B-Non-Thinking-RL-Math $\rightarrow$ Qwen3-4B setting. All variants share the same $\mathcal I_H(y)$ gate.}
\label{tab:shrinkage_shapes}
\end{table}

Table~\ref{tab:shrinkage_shapes} shows our linear map tops every column ($56.7$/$83.3$, $53.3$/$70.0$). The Constant map is worst, leaving the low-divergence entropy drain uncurbed while weakening high-divergence corrections; concave Sqrt under-attenuates low-divergence updates (pass@16 $80.4$/$67.5$); convex Square over-suppresses moderate-divergence corrections that still carry teacher signal, posting the lowest pass@1. Scaling attenuation linearly with $w_y$ thus best balances suppressing low-discrepancy drain against preserving high-discrepancy corrections.

\subsection{Validating Entropy-Influence under Training Dynamics}
\label{subsec:training_dynamics}


We now isolate the effect of $\mathcal{I}_H(y)$-based selection to confirm that it successfully targets entropy-draining updates in practice. Figure~\ref{fig:entropy_dynamics} tracks the student's policy entropy during training under vanilla OPD and two hard-masking variants: one that drops all entropy-contracting updates ($\mathcal{I}_H(y)<0$), and another that drops all entropy-expanding updates ($\mathcal{I}_H(y)\ge0$). 

The resulting trajectories cleanly bracket the baseline, supporting that $\mathcal{I}_H(y)$ is an effective control signal. Crucially, while Theorem 1 assumes idealized logit-space, single-step gradient updates, these global trajectory shifts indicate that $\mathcal{I}_H(y)$ remains a robust directional proxy under actual parameter-space optimization with shared weights and AdamW dynamics. This empirical evidence bridges the gap between our localized theoretical derivation and full-scale training.

Masking the expanding updates drives entropy far below vanilla OPD, whereas masking the contracting updates lifts it to the highest overall level. This indicates that selecting updates by the sign of $\mathcal{I}_H(y)$ steers the model's global entropy trajectory. However, maximizing entropy via a hard mask is over-protective. Discarding {every} entropy-contracting update sacrifices crucial knowledge signals whose knowledge value outweighs their minor entropy cost, which is why this variant suffers the worst pass@1 drop in Table~\ref{tab:ablation}.

\begin{figure}[t]
\centering
\begin{subfigure}[t]{0.49\columnwidth}
\centering
\includegraphics[width=\linewidth]{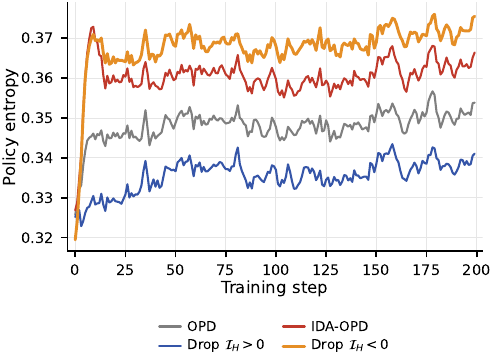}
\caption{}
\label{fig:entropy_dynamics}
\end{subfigure}
\hfill
\begin{subfigure}[t]{0.49\columnwidth}
\centering
\includegraphics[width=\linewidth]{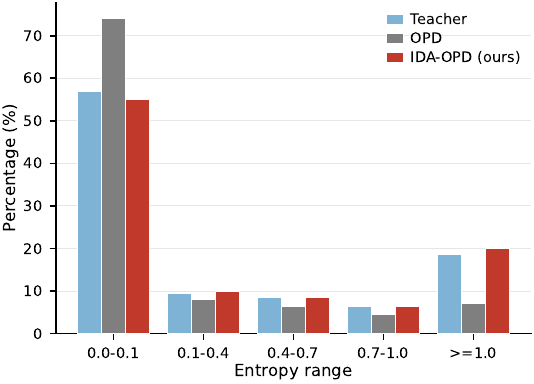}
\caption{}
\label{fig:entropy_hist}
\end{subfigure}
\caption{Entropy analysis of IDA-OPD. \textbf{(a)} Policy entropy over training: masking entropy-expanding updates ($\mathcal I_H\!<\!0$ kept) drives entropy below vanilla OPD, masking entropy-contracting updates ($\mathcal I_H\!\ge\!0$ kept) lifts it highest but over-protects, and IDA-OPD reweights softly to sit between the two while staying well above OPD. \textbf{(b)} Token-level entropy on AIME 2024 problems: OPD piles far more mass in the lowest bin ($0.0$--$0.1$) and under-represents high-entropy tokens ($\geq\!1.0$), whereas IDA-OPD stays close to the teacher at both ends.}
\label{fig:entropy_analysis}
\end{figure}

\subsection{Token-Level Entropy Analysis}
\label{subsec:token_entropy}

To examine how IDA-OPD reshapes per-token uncertainty, we run a token-level analysis on AIME 2024 dataset. Each model rolls out its own response and, following Section~\ref{subsec:advantage_entropy}, we compute the entropy of its predicted distribution at every token; Figure~\ref{fig:entropy_hist} aggregates these into a histogram comparing the teacher against OPD- and IDA-OPD-trained students.

In the mid-entropy range ($\sim\!0.1$--$1.0$) OPD and IDA-OPD both track the teacher; the differences concentrate at the ends. In the lowest-entropy bin ($0.0$--$0.1$) OPD piles up far more mass than the teacher ($\sim\!74\%$ vs.\ $\sim\!57\%$), the signature of over-confident, near-deterministic predictions from entropy contraction, whereas IDA-OPD stays close ($\sim\!55\%$) and correspondingly retains more mass in the high-entropy region ($\geq\!1.0$) where OPD is depleted. This is the downstream trace of the low-$\delta_y$ pressure in Figure~\ref{fig:entropy_flux}: by attenuating updates in proportion to discrepancy, IDA-OPD avoids the $0.0$--$0.1$ pile-up and keeps the distribution closer to the teacher's.

\subsection{Computational Efficiency}
\label{subsec:efficiency}

Like OPD, IDA-OPD transfers only one scalar per position ($O(L)$), whereas teacher-informed methods (EOPD, AOPD) transfer a truncated top-$K$ teacher distribution at selected positions ($O(L_{\mathrm{sel}}K)$, $K\gg 1$); its only extra cost, the entropy term $D_y$, reuses the student logits already materialized in the forward pass. IDA-OPD thus keeps OPD's cheap footprint while matching the far more expensive teacher-informed baselines. The full per-rollout cost breakdown is given in Appendix.

\section{Related Work}

\paragraph{On-Policy Distillation for Language Models.} 
Offline knowledge distillation supervises the student on fixed teacher outputs~\cite{hinton2015distilling,kim2016sequence}, but never corrects it on the prefixes it visits at inference. On-policy distillation (OPD) removes this train--inference mismatch by training on the student's \emph{own} sampled prefixes~\cite{agarwal2024policy, yu2026preference}. OPD variants differ in per-token teacher cost: \emph{full-distribution} OPD matches the teacher's entire next-token distribution but needs full-vocabulary logits~\cite{gu2024minillm,ko2024distillm,xu2024speculative}, whereas \emph{sampled-token} OPD queries only the sampled token's probability, the single-sample estimator of the reverse KL~\cite{lu2025opd,fu2026revisiting,jia2026aopd}, and is far cheaper for long rollouts. This efficiency has extended OPD to cross-tokenizer, agent, and unified-framework settings~\cite{sun2026simct,li2026curriculum,sun2026easyopd,zhong2026sod}. Yet by relying on a single sampled-token signal, sampled-token OPD can quietly contract the student's diversity, an effect prior sampled-token work has not traced to individual updates.

\paragraph{Token Selection and Reweighting in OPD.}
The closest work reconsiders \emph{which} updates OPD should trust. Disagreement- and importance-based methods select or reweight tokens by teacher--student divergence: TIP by token importance~\cite{xu2026tip}, Token Teachability by whether a disagreement is learnable~\cite{wang2026teachability}, and Filter-Then-Reweight by optimization granularity~\cite{li2026filter}. AOPD decouples exploration from imitation~\cite{jia2026aopd}, while REOPOLD and negative-sample reinforcement reshape sampled-token credits~\cite{ko2026reopold,zhu2026surprising}. Most related, Entropy-Aware OPD (EOPD) restores lost entropy with a forward-KL term at high-uncertainty tokens~\cite{jin2026entropy}. All of these locate salient positions by teacher--student disagreement or uncertainty, and the strongest, EOPD~\cite{jin2026entropy} and AOPD~\cite{jia2026aopd}, reintroduce full or top-$K$ teacher distributions to act on them. IDA-OPD instead locates positions by the signed effect of each update on local entropy and shrinks only low-value, entropy-contracting updates, with no teacher query beyond the sampled-token probability. More broadly, RL studies show that policy entropy and a few high-entropy ``forking'' tokens govern reasoning diversity~\cite{cui2025entropy,wang2025beyond}; unlike that reward-driven setting, we ask how the OPD objective itself drains entropy and intervene at the level of individual updates.

\section{Conclusion}

To address diversity distillation failure in sampled-token OPD, we derived {First-Order Local Entropy Influence} ($\mathcal I_H(y)$), a cheap, teacher-free proxy that flags whether an update expands or contracts entropy. Guided by it, IDA-OPD preserves entropy-expanding updates and replaces entropy-contracting advantages with a divergence-adaptive shrinkage $w_y A_y$. It consistently improves pass@$k$ over methods sharing its sampled-token budget, reaching parity with the strongest teacher-informed methods at strictly lower cost while broadly maintaining vanilla OPD's pass@1.

\bibliography{aaai2027}

\end{document}